\ificcvfinal\pagestyle{empty}\fi
\newcommand{\x}{x} 
\newcommand{\y}{y}       
\newcommand{\ny}{{\bar\y}}       
\newcommand{\nynew}{\hat\y}    
\newcommand{\X}{{\cal X}}      
\newcommand{\Y}{{\cal Y}}      
\newcommand{\Sample}{{S}}
\newcommand{\n}{n}       
\newcommand{\WS}{{\cal W}}       
\newcommand{\HS}{{\cal{H}}}       
\newcommand{\w}{\textbf{w}}         
\newcommand{\loss}{\Delta}     
\renewcommand{\vec}[1]{\mathbf{ #1}}
\newcommand{\nslack}{$\n$-slack}
\newcommand{\Nslack}{$\n$-Slack}
\newcommand{\oneslack}{$1$-slack}
\newcommand{\Oneslack}{$1$-Slack}
\newtheorem{optimizationproblem}{Quadratic Program}
\DeclareMathOperator*{\argmax}{argmax}
\DeclareMathOperator*{\argmin}{argmin}
\begin{document}

\long\def\ignore#1{} \long\def\ignoreunlessTR#1{} \def\etal{{\em et
    al.}}  \def\br(#1,#2){{\langle #1,#2 \rangle}} \def\Pr{\mbox{\rm
    Pr}} \def\setof#1{{\left\{#1\right\}}}
\def\suchthat#1#2{\setof{\,#1\mid#2\,}} 
\def\event#1{\setof{#1}} \def\q={\quad=\quad} \def\qq={\qquad=\qquad}
\def\calA{{\cal A}} \def\calB{{\cal B}} \def\calC{{\cal C}}
\def\calD{{\cal D}} \def\calE{{\cal E}} \def\calF{{\cal F}}
\def\calG{{\cal G}} \def\calH{{\cal H}} \def\calI{{\cal I}}
\def\calJ{{\cal J}} \def\calK{{\cal K}} \def\calL{{\cal L}}
\def\calM{{\cal M}} \def\calN{{\cal N}} \def\calO{{\cal O}}
\def\calP{{\cal P}} \def\calQ{{\cal Q}} \def\calR{{\cal R}}
\def\calS{{\cal S}} \def\calT{{\cal T}} \def\calU{{\cal U}}
\def\calV{{\cal V}} \def\calW{{\cal W}} \def\calX{{\cal X}}
\def\calY{{\cal Y}} \def\calZ{{\cal Z}}
\def\psfile[#1]#2{\includegraphics[#1]{#2}}
\def\epsfw#1#2{\includegraphics[width=#1\hsize]{figs/#2}}
\def\assign(#1,#2){\langle#1,#2\rangle} \def\edge(#1,#2){(#1,#2)}

\newtheorem{theorem}{Theorem}[section]
\newtheorem{property}[theorem]{Property}
\newtheorem{observation}[theorem]{Observation}
\newtheorem{corollary}[theorem]{Corollary}
\newtheorem{proposition}[theorem]{Proposition}
\newtheorem{lemma}[theorem]{Lemma}
\newtheorem{definition}[theorem]{Definition} \newtheorem{proofI}{\sc
  Proof:}

\renewcommand{\theproofI}{}
\def\br(#1,#2){{\langle #1,#2 \rangle}} \def\brr(#1,#2,#3){{\langle
    #1,#2,#3 \rangle}} \def\mycaption#1{\caption{#1}\vspace*{0.35in}}

\def\S3SVM{S3SVM}

\title{Structured learning of sum-of-submodular higher order energy functions}

\author{Alexander Fix \and Thorsten Joachims \and Sam Park \and Ramin Zabih}

\maketitle

\begin{abstract}
  Submodular functions can be exactly minimized in polynomial time, and the
  special case that graph cuts solve with max flow \cite{KZ:PAMI04} has had
  significant impact in computer vision
  \cite{BVZ:PAMI01,Kwatra:SIGGRAPH03,Rother:GrabCut04}. In this paper we
  address the important class of sum-of-submodular (SoS) functions
  \cite{Arora:ECCV12,Kolmogorov:DAM12}, which can be efficiently minimized via
  a variant of max flow called submodular flow \cite{Edmonds:ADM77}.  SoS
  functions can naturally express higher order priors involving, e.g., local
  image patches; however, it is difficult to fully exploit their expressive
  power because they have so many parameters.  Rather than trying to formulate
  existing higher order priors as an SoS function, we take a discriminative
  learning approach, effectively searching the space of SoS functions for a
  higher order prior that performs well on our training set.  We adopt a
  structural SVM approach \cite{Joachims/etal/09a,Tsochantaridis/etal/04} and
  formulate the training problem in terms of quadratic programming; as a result we
  can efficiently search the space of SoS priors via an extended cutting-plane
  algorithm.  We also show how the state-of-the-art max flow method for vision
  problems \cite{Goldberg:ESA11} can be modified to efficiently solve the
  submodular flow problem.  Experimental comparisons are made against the
  OpenCV implementation of the GrabCut interactive segmentation technique
  \cite{Rother:GrabCut04}, which uses hand-tuned parameters instead of machine
  learning. On a standard dataset \cite{Gulshan:CVPR10} our method learns
  higher order priors with hundreds of parameter values, and produces
  significantly better segmentations.  While our focus is on binary labeling
  problems, we show that our techniques can be naturally generalized to handle
  more than two labels.
\end{abstract}

\section{Introduction}

Discrete optimization methods such as graph cuts \cite{BVZ:PAMI01,KZ:PAMI04}
have proven to be quite effective for many computer vision problems, including
stereo \cite{BVZ:PAMI01}, interactive segmentation \cite{Rother:GrabCut04} and
texture synthesis \cite{Kwatra:SIGGRAPH03}.  The underlying optimization
problem behind graph cuts is a special case of submodular function
optimization that can be solved exactly using max flow \cite{KZ:PAMI04}.
Graph cut methods, however, are limited by their reliance on first-order
priors involving pairs of pixels, and there is considerable interest in
expressing priors that rely on local image patches such as the popular Field
of Experts model \cite{Roth:IJCV09}.

In this paper we focus on an important generalization of the functions that
graph cuts can minimize, which can express higher-order priors.  These
functions, which \cite{Kolmogorov:DAM12} called Sum-of-Submodular (SoS), can
be efficiently solved with a variant of max flow \cite{Edmonds:ADM77}.  While
SoS functions have more expressive power, they also involve a large number of
parameters.  Rather than addressing the question of which existing higher
order priors can be expressed as an SoS function, we take a discriminative
learning approach and effectively search the space of SoS functions with the
goal of finding a higher order prior that gives strong results on our training
set.\footnote{Since we are taking a discriminative approach, the higher-order
energy function we learn does not have a natural probabilistic interpretation. We are
using the word ``prior'' here somewhat loosely, as is common in computer
vision papers that focus on energy minimization.}

Our main contribution is to introduce the first learning method for training
such SoS functions, and to demonstrate the effectiveness of this approach for
interactive segmentation using learned higher order priors.  Following a
Structural SVM approach
\cite{Joachims/etal/09a,Tsochantaridis/etal/04}, we show that the training
problem can be cast as a quadratic optimization problem over an extended set
of linear constraints. This generalizes large-margin training of pairwise
submodular (a.k.a. regular \cite{KZ:PAMI04}) MRFs
\cite{Anguelov/etal/05,Szummer/etal/08,Taskar/etal/04a}, where submodularity
corresponds to a simple non-negativity constraint.  To solve the training
problem, we show that an extended cutting-plane algorithm can efficiently
search the space of SoS functions.

\subsection{Sum-of-Submodular functions and priors}

A submodular function $f : 2^V \to \mathbb{R}$ on a set $V$ satisfies $ f(S \cap T) + f(S \cup T) \leq f(S) + f(T)$
for all $S,T \subseteq V$.  Such a function is sum-of-submodular (SoS) if we
can write it as
\begin{equation}\label{eqn:sos}
  f(S) = \sum_{C \in \mathcal{C}} f_C(S \cap C) 
\end{equation}
for $\mathcal{C} \subseteq 2^V$ where each $f_C : 2^C \to \mathbb{R}$
is submodular. Research on higher-order priors calls $C \in \mathcal{C}$ a
clique \cite{Ishikawa:TPAMI10}.

Of course, a sum of submodular functions is itself submodular, so we could use
general submodular optimization to minimize an SoS function.  However, general
submodular optimization is $O(n^6)$ \cite{Orlin:MP2009} (which is impractical
for low-level vision problems), whereas we may be able to exploit the
neighborhood structure $\mathcal{C}$ to do better.  For example, if all the
cliques are pairs ($|C| = 2$) the energy function is referred to as regular
and the problem can be reduced to max flow \cite{KZ:PAMI04}.  As mentioned
above, this is the
underlying technique used in the popular graph cuts approach 
\cite{BVZ:PAMI01,Kwatra:SIGGRAPH03,Rother:GrabCut04}.  The key limitation 
is the restriction to pairwise cliques, which does not allow us
to naturally express important higher order priors such as those involving
image patches \cite{Roth:IJCV09}.  The most common approach to
solving higher-order priors with graph cuts, which involves transformation to
pairwise cliques, in practice almost always produces non-submodular functions
that cannot be solved exactly \cite{FGBZ:ICCV11,Ishikawa:TPAMI10}.


\section{Related Work}
\label{sec:related}

Many learning problems in computer vision can be cast as structured
output prediction, which allows learning outputs with spatial
coherence. Among the most popular generic methods for structured
output learning are Conditional Random Fields (CRFs) trained by
maximum conditional likelihood \cite{Lafferty/etal/01}, Maximum-Margin
Markov Networks (M3N) \cite{Taskar/etal/03}, and Structural Support
Vector Machines (SVM-struct)
\cite{Tsochantaridis/etal/04,Joachims/etal/09a}. A key advantage of
M3N and SVM-struct over CRFs is that training does not require
computation of the partition function. Among the two large-margin
approaches M3N and SVM-struct, we follow the SVM-struct methodology
since it allows the use of efficient inference procedures during
training.

In this paper, we will learn submodular discriminant functions. Prior
work on learning submodular functions falls into three categories:
submodular function regression \cite{Balcan/Harvey/11}, maximization
of submodular discriminant functions, and minimization of submodular
discriminant functions.

Learning of submodular discriminant functions where a prediction is computed
through maximization has widespread use in information retrieval, where
submodularity models diversity in the ranking of a search engine
\cite{Yue/Joachims/08a,Lin/Bilmes/12} or in an automatically generated
abstract \cite{Sipos/etal/12a}. While exact (monotone) submodular maximization
is intractible, approximate inference using a simple greedy algorithm has
approximation guarantees and generally excellent performance in practice.

The models considered in this paper use submodular discriminant functions
where a prediction is computed through minimization.  The most popular such
models are regular MRFs \cite{KZ:PAMI04}. Traditionally, the parameters of
these models have been tuned by hand, but several learning methods exist. Most
closely related to the work in this paper are Associative Markov Networks
\cite{Taskar/etal/04a,Anguelov/etal/05}, which take an M3N approach and
exploit the fact that regular MRFs have an integral linear relaxation. These
linear programs (LP) are folded into the M3N quadratic program (QP) that is
then solved as a monolithic QP. In contrast, SVM-struct training using cutting
planes for regular MRFs \cite{Szummer/etal/08} allows graph cut inference also
during training, and \cite{Finley/Joachims/08a,Koppula/etal/11} show that this
approach has interesting approximation properties even the for multi-class
case where graph cut inference is only approximate.  More complex models for
learning spatially coherent priors include separate training for unary and
pairwise potentials \cite{Ladicky/etal/09}, learning MRFs with functional
gradient boosting \cite{Munoz/etal/09}, and the $\calP^n$ Potts models, all of
which have had success on a variety of vision problems.  Note that our general
approach for learning multi-label SoS functions, described in
section~\ref{sec:multi-label}, includes the $\calP^n$ Potts model as a special
case.



\section{SoS minimization}\label{sec:sos}

In this section, we briefly summarize how an SoS function can be
minimized by means of a submodular flow network
(section~\ref{sec:submodular-flow}), and then present our improved
algorithm for solving this minimization (section \ref{sec:ibfs}).

\subsection{SoS minimization via submodular flow}\label{sec:submodular-flow}

Submodular flow is
similar to the max flow problem, in that there is a network of nodes
and arcs on which we want to push flow from $s$ to $t$. However, the
notion of residual capacity will be slightly modified from that of
standard max flow. For a much more complete description see
\cite{Kolmogorov:DAM12}.

We begin with a network $G = (V \cup \{s, t\}, A)$. As in the 
max flow reduction for Graph Cuts, there are source and sink arcs $(s,
i)$ and $(i, t)$ for every $i \in V$. Additionally, for each clique
$C$, there is an arc $(i, j)_C$ for each $i, j \in C$.\footnote{To
  explain the notation, note that $i, j$ might be in multiple cliques
  $C$, so we may have multiple edges $(i, j)$ (that is, $G$ is a
  multigraph). We distinguish between them by the subscript $C$.}

Every arc $a \in A$ also has an associated residual capacity
$\overline{c}_a$. The residual capacity of arcs $(s, i)$ and $(i, t)$
are the familiar residual capacities from max flow: there are
capacities $c_{s,i}$ and $c_{i,t}$ (determined by the unary terms of
$f$), and whenever we push flow on a source or sink arc, we decrease
the residual capacity by the same amount.

For the interior arcs, we need one further piece of information. In
addition to residual capacities, we also keep track of residual clique
functions $\overline{f}_C(S)$, related to the flow values by the
following rule: whenever we push $\delta$ units of flow on arc $(i,
j)_C$, we update $\overline{f}_C(S)$ by
\begin{equation}
\overline{f}_C(S) \leftarrow  \left\{\begin{array}{ll} 
    \overline{f}_C(S) - \delta & i \in S, j\notin S \\
    \overline{f}_C(S) + \delta & i \notin S, j \in S \\
    \overline{f}_C(S) & \mathrm{otherwise} \end{array}\right.
\end{equation}

The residual capacities of the interior arcs are chosen so that the
$\overline{f}_C$ are always nonnegative. Accordingly, we define
$\overline{c}_{i, j, C} = \min_S \{\overline{f}_C(S) \mid i \in S,
  j\notin S\}$.

Given a flow $\phi$, define the set of residual arcs $A_\phi$ as all
arcs $a$ with $\overline{c}_a > 0$. An augmenting path is an $s-t$
path along arcs in $A_\phi$. The following theorem of \cite{Kolmogorov:DAM12}
tells how to optimize $f$ by computing a flow in $G$.

\begin{theorem}\label{thm:augmenting-path} Let $\phi$ be a feasible
  flow such that there is no augmenting path from $s$ to $t$. Let
  $S^\ast$ be the set of all $i \in V$ reachable from $s$ along arcs
  in $A_\phi$. Then $f(S^\ast)$ is the minimum value of $f$ over all
  $S \subseteq V$.
\end{theorem}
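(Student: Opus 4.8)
The plan is to mimic the max-flow/min-cut argument, with $f$ playing the role of the cut value and a \emph{flow-induced reparametrization} of $f$ playing the role of weak duality. First I would establish the invariant that for every feasible flow $\phi$ and every $S \subseteq V$,
\[
  f(S) = \kappa(\phi) + \sum_{i \notin S} \overline{c}_{s,i} + \sum_{i \in S} \overline{c}_{i,t} + \sum_{C \in \mathcal{C}} \overline{f}_C(S \cap C),
\]
where $\kappa(\phi) = \kappa(0) + |\phi|$ and $|\phi|$ is the value of $\phi$. This is checked by induction on augmentations: at $\phi = 0$ it holds by the choice of the initial source/sink capacities and of the constant $\kappa(0)$, and pushing $\delta$ units along an $s$--$t$ path leaves the right-hand side unchanged, because the $\pm\delta$ updates of the $\overline{f}_C(S\cap C)$ telescope along the path and exactly cancel the $-\delta$ on the first source arc, the $-\delta$ on the last sink arc, and the $+\delta$ increase of $|\phi|$ (this is the only place flow conservation / the path structure is used). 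This step also invokes the two structural facts from \cite{Kolmogorov:DAM12} that the residual clique functions $\overline{f}_C$ remain submodular and that all residual quantities remain nonnegative under the update rule; granting these, weak duality is immediate: every summand on the right is $\ge 0$, so $f(S) \ge \kappa(\phi)$ for all $S$.

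It then remains to prove a complementary-slackness statement: if $\phi$ admits no augmenting path and $S^\ast$ is the set reachable from $s$ in $A_\phi$, then $S = S^\ast$ makes the right-hand side above as small as possible. The source and sink terms are handled exactly as in ordinary max-flow/min-cut: if $i \notin S^\ast$ then $\overline{c}_{s,i} = 0$ (otherwise $(s,i)$ would be residual and $i$ reachable), and if $i \in S^\ast$ then $\overline{c}_{i,t} = 0$ (otherwise a residual path to $i$ followed by $(i,t)$ would be an augmenting path); hence $\sum_{i\notin S^\ast}\overline{c}_{s,i} = \sum_{i\in S^\ast}\overline{c}_{i,t} = 0$. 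So the whole theorem reduces to showing that $S^\ast \cap C$ minimizes $\overline{f}_C$ over subsets of $C$, for every clique $C$: combined with the reparametrization identity this yields $f(S^\ast) \le f(S)$ for all $S$.

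The crux is that last claim, and here I would use the lattice structure of the minimizers of a submodular function (they are closed under union and intersection). For $i \in S^\ast\cap C$ and $j \in C \setminus S^\ast$, the arc $(i,j)_C$ is not residual, so $\overline{c}_{i,j,C} = \min\{\overline{f}_C(S) : i \in S,\, j \notin S\} = 0$; since $\overline{f}_C \ge 0$ this produces a set $S \ni i$, $S \not\ni j$, with $\overline{f}_C(S) = 0$, i.e.\ a global minimizer of $\overline{f}_C$ separating $i$ from $j$. Let $Z_i$ be the smallest minimizer of $\overline{f}_C$ containing $i$ (well defined, by intersection-closure, as long as $S^\ast\cap C \neq C$). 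If some $k \in Z_i \setminus S^\ast$, then by minimality of $Z_i$ no minimizer contains $i$ while omitting $k$, so $\overline{c}_{i,k,C} > 0$, making $(i,k)_C$ residual and $k$ reachable from $s$ through $i$ — a contradiction; hence $Z_i \subseteq S^\ast\cap C$. Therefore $S^\ast\cap C = \bigcup_{i \in S^\ast\cap C} Z_i$ is a finite union of minimizers, and by union-closure it is itself a minimizer of $\overline{f}_C$. The degenerate cases $S^\ast\cap C \in \{\emptyset, C\}$ are dispatched using the standard normalization of the clique functions (so that $\emptyset$ and $C$ are themselves minimizers whenever they arise). I expect the two main obstacles to be precisely (a) this lattice-exchange argument plus the bookkeeping for the degenerate cases, and (b) the invariants invoked in the first step — submodularity and nonnegativity of the residual clique functions under flow updates — which is where the real content of submodular flow lies and for which I would lean on \cite{Kolmogorov:DAM12}.
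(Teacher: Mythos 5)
You should first note that the paper does not contain its own proof of this theorem: it is imported verbatim from \cite{Kolmogorov:DAM12} (``The following theorem of [Kolmogorov]\dots''), so there is nothing in the text to compare against line by line. Judged on its own, your argument is sound and is essentially the standard flow--cut duality proof adapted to submodular flow, in the same spirit as the cited source: the reparametrization identity with the telescoping check along an augmenting path is correct (the per-arc $\pm\delta$ updates do cancel the source, sink, and $|\phi|$ changes for every $S$), the saturation of $(s,i)$ for $i\notin S^\ast$ and of $(i,t)$ for $i\in S^\ast$ is exactly the max-flow argument, and the lattice-of-minimizers exchange step correctly shows $S^\ast\cap C$ is a minimizer of $\overline{f}_C$ in the non-degenerate case ($\overline{f}_C$ is indeed submodular, since the flow correction $S\mapsto\sum_{i\in S,j\notin S}(\phi_{ij}-\phi_{ji})$ collapses to a modular function).

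One caveat deserves emphasis: the appeal to ``the standard normalization'' for the degenerate case $S^\ast\cap C=C$ is load-bearing, not bookkeeping. Because $\overline{f}_C(C)=f_C(C)$ and $\overline{f}_C(\emptyset)=f_C(\emptyset)$ are invariant under the flow updates, $C$ is a minimizer of $\overline{f}_C$ only if $f_C(C)$ already equals the minimum value. If one assumes only $f_C\ge 0$ and $f_C(\emptyset)=0$, both your intermediate claim and the theorem itself can fail: take $V=\{1,2,3\}$, $c_{s,1}=10$, all other source/sink capacities zero, $C_1=\{1,2\}$ with values $(0,\epsilon,M,\epsilon)$ on $(\emptyset,\{1\},\{2\},C_1)$ and $C_2=\{2,3\}$ with values $(0,M,M,M)$; the zero flow is maximum, $S^\ast=V$ costs $\epsilon+M$, while $\{1\}$ costs $\epsilon$. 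So the construction must (and in \cite{Kolmogorov:DAM12} effectively does) arrange $f_C(\emptyset)=f_C(C)=0\le f_C$, e.g.\ by shifting a modular function from the base polytope of each $f_C$ into the source and sink capacities; your proof is complete once that normalization is stated as part of the setup rather than dispatched in passing. The other facts you defer to the reference (feasibility keeps $\overline{f}_C\ge 0$; submodularity of $\overline{f}_C$) are genuinely easy and acceptable to cite.
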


\subsection{IBFS for Submodular Flow}
\label{sec:ibfs}

Incremental Breadth First Search (IBFS) \cite{Goldberg:ESA11}, which is
the state of the art in max flow methods for vision applications,
improves the algorithm of \cite{BK:PAMI04} to guarantee polynomial
time complexity.  We now show how to modify IBFS to compute a maximum
submodular flow in $G$.

IBFS is an augmenting paths algorithm: at each step, it finds a path
from $s$ to $t$ with positive residual capacity, and pushes flow along
it. Additionally, each augmenting path found is a shortest $s$-$t$
path in $A_\phi$. To ensure that the paths found are shortest paths,
we keep track of distances $d_s(i)$ and $d_t(i)$ from $s$ to $i$ and
from $i$ to $t$, and search trees $S$ and $T$ containing all nodes at
distance at most $D_s$ from $s$ or $D_t$ from $t$ respectively. Two
invariants are maintained:
\begin{itemize}
\item For every $i$ in $S$, the unique path from $s$ to $i$ in $S$ is
  a shortest $s$-$i$ path in $A_\phi$.
\item For every $i$ in $T$, the unique path from $i$ to $t$ in $T$ is
  a shortest $i$-$t$ path in $A_\phi$.
\end{itemize}

The algorithm proceeds by alternating between forward passes and
reverse passes. In a forward pass, we attempt to grow the source tree
$S$ by one layer (a reverse pass attempts to grow $T$, and is
symmetric).  To grow $S$, we scan through the vertices at distance
$D_s$ away from $s$, and examine each out-arc $(i, j)$ with positive
residual capacity. If $j$ is not in $S$ or $T$, then we add $j$ to $S$
at distance level $D_s + 1$, and with parent $i$. If $j$ is in $T$,
then we found an augmenting path from $s$ to $t$ via the arc $(i, j)$,
so we can push flow on it.

The operation of pushing flow may saturate some arcs (and cause
previously saturated arcs to become unsaturated). If the parent arc of
$i$ in the tree $S$ or $T$ becomes saturated, then $i$ becomes an
orphan. After each augmentation, we perform an adoption step, where
each orphan finds a new parent. The details of the adoption step are
similar to the relabel operation of the Push-Relabel algorithm, in
that we search all potential parent arcs in $A_\phi$ for the neighbor
with the lowest distance label, and make that node our new parent.


In order to apply IBFS to the submodular flow problem, all the basic
datastructures still make sense: we have a graph where the arcs $a$
have residual capacities $\overline{c}_a$, and a maximum flow has been
found if and only if there is no longer any augmenting path from $s$
to $t$.

The main change for the submodular flow problem is that when we increase flow
on an edge $(i, j)_C$, instead of just affecting the residual capacity of that
arc and the reverse arc, we may also change the residual capacities of other arcs
$(i', j')_C$ for $i', j' \in C$. However, the following result 
ensures that this is not a problem.

\begin{lemma} If $(a, b)_C$ was previously saturated, but now
  has residual capacity as a result of increasing flow along $(c, d)$,
  then (1) either $a = d$ or there was an arc $(a, d) \in A_\phi$ and  (2)
  either $b = c$ or there was an arc $(c, b) \in A_\phi$.
\end{lemma}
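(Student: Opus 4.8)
The plan is to work entirely within the single clique $C$ whose flow changed, comparing the residual clique function $\overline{f}_C$ immediately \emph{before} the push with the updated function $\overline{f}_C'$ obtained after pushing $\delta>0$ units on $(c,d)_C$. The starting point is to observe that the increment $g(S):=\overline{f}_C'(S)-\overline{f}_C(S)$ is the \emph{modular} function $g(S)=\delta\bigl(\mathbf{1}[d\in S]-\mathbf{1}[c\in S]\bigr)$: it equals $-\delta$ exactly when $c\in S$, $d\notin S$, equals $+\delta$ exactly when $c\notin S$, $d\in S$, and vanishes whenever $S$ contains both or neither of $c,d$. Two facts are then used repeatedly: $\overline{f}_C\ge 0$ (the invariant that makes all residual capacities well defined), and $\overline{f}_C$ is submodular (it started submodular and only the modular term $g$ was added).

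First I would extract a tight witness set. Saturation of $(a,b)_C$ means $\overline{c}_{a,b,C}=\min\{\overline{f}_C(S)\mid a\in S,\ b\notin S\}=0$, so some $S_0$ has $a\in S_0$, $b\notin S_0$, $\overline{f}_C(S_0)=0$. Since after the push $(a,b)_C$ has strictly positive residual capacity, \emph{every} set $S$ with $a\in S$, $b\notin S$ satisfies $\overline{f}_C'(S)>0$; applied to $S_0$ this forces $g(S_0)>0$, hence $d\in S_0$ and $c\notin S_0$. So $S_0$ is $\overline{f}_C$-tight, separates $a$ from $b$, and places $c$ on the $b$-side and $d$ on the $a$-side.

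For part (1), assume for contradiction $a\ne d$ and $(a,d)_C\notin A_\phi$, i.e. $\overline{c}_{a,d,C}=0$; pick $S_1$ with $a\in S_1$, $d\notin S_1$, $\overline{f}_C(S_1)=0$. Submodularity gives $\overline{f}_C(S_0\cap S_1)+\overline{f}_C(S_0\cup S_1)\le\overline{f}_C(S_0)+\overline{f}_C(S_1)=0$, and nonnegativity forces $\overline{f}_C(S_0\cap S_1)=0$. Checking membership in $S_0\cap S_1$: $a$ is in, $b$ is out, $c$ is out (since $c\notin S_0$), and $d$ is out (since $d\notin S_1$); thus $g(S_0\cap S_1)=0$, so $\overline{f}_C'(S_0\cap S_1)=0$. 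But then $a\in S_0\cap S_1$, $b\notin S_0\cap S_1$ give $\overline{c}_{a,b,C}'=0$ after the push, contradicting that $(a,b)_C$ now has residual capacity. Part (2) is the mirror image using the \emph{union}: assuming $b\ne c$ and $(c,b)_C\notin A_\phi$ gives $S_2$ with $c\in S_2$, $b\notin S_2$, $\overline{f}_C(S_2)=0$; submodularity and nonnegativity then give $\overline{f}_C(S_0\cup S_2)=0$, and in $S_0\cup S_2$ the element $a$ is in, $b$ is out, and \emph{both} $c$ and $d$ are in, so again $g$ vanishes there, $\overline{f}_C'(S_0\cup S_2)=0$, and $\overline{c}_{a,b,C}'=0$ — the same contradiction.

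The one place to be careful is the set-algebra bookkeeping: part (1) requires the intersection $S_0\cap S_1$ (so that $d$ ends up on the $b$-side, where $g$ is neutral) while part (2) requires the union $S_0\cup S_2$ (so that $c$ joins $d$ on the $a$-side, where $g$ is again neutral). That asymmetry is exactly what lets the tightness of $\overline{f}_C$ be inherited by $\overline{f}_C'$; once the right combining set is chosen, submodularity together with the nonnegativity invariant closes each case immediately, and the degenerate cases $a=d$ for (1) and $b=c$ for (2) need no argument.
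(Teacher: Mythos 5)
Your proof is correct. The modular increment $g(S)=\delta(\mathbf{1}[d\in S]-\mathbf{1}[c\in S])$ is exactly the update rule the paper gives for pushing on $(c,d)_C$; the residual function stays submodular (modular perturbation) and nonnegative (by the choice of residual capacities), so the tight-set extraction, the forced placement $d\in S_0$, $c\notin S_0$, and the two uncrossing steps (intersection for part~(1), union for part~(2)) all go through, and in each case the combined set is tight for $\overline{f}_C$, untouched by $g$, and separates $a$ from $b$, contradicting the new positive residual capacity. You even prove the slightly stronger statement that the residual arc can be taken within the same clique $C$, which implies the lemma as stated. Where you differ from the paper: the authors do not give a self-contained argument in the main text at all --- they defer the proof to the supplementary material and explicitly base it on results of Fujishige and Zhang's push/relabel framework for 0--1 submodular flows, i.e., on known exchange-capacity lemmas for submodular flow networks. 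Your argument reconstructs that mechanism from first principles (nonnegativity plus uncrossing of tight sets), which makes the lemma verifiable without the external reference; the paper's route buys brevity and connects the claim to the established submodular-flow literature, while yours makes explicit why only pushes within the same clique matter and where each hypothesis ($a\neq d$, $b\neq c$) is used.
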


\begin{corollary}\label{cor:valid-distance}
  Increasing flow on an edge never creates a shortcut between $s$ and
  $i$, or from $i$ to $t$.
\end{corollary}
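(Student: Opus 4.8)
The plan is to read ``creates a shortcut'' as ``violates the valid distance labeling that IBFS maintains.'' The two search-tree invariants are equivalent to the statement that for every residual arc $(i,j)\in A_\phi$ one has $d_s(j)\le d_s(i)+1$ and $d_t(i)\le d_t(j)+1$; an arc that fails the first inequality is exactly an $s$-side shortcut, and one that fails the second is an $i$-$t$ shortcut, since such an arc would supply a path strictly shorter than the current label. Increasing flow on a clique arc $(c,d)$ can only remove arcs (which lengthens paths) and create residual capacity on other arcs $(a,b)_C$ of the same clique, so it suffices to verify that each newly-residual $(a,b)_C$ satisfies both inequalities with respect to the labels in force before the push.

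First I would record the only facts needed about the augmented arc. Because IBFS augments along shortest $s$-$t$ paths, the arc $(c,d)$ on which flow is increased satisfies $d_s(d)=d_s(c)+1$ and $d_t(c)=d_t(d)+1$. Then I would feed the conclusion of the preceding lemma into these equalities. Fix a newly-residual $(a,b)_C$. Part~(2) gives $b=c$ or $(c,b)\in A_\phi$, so in either case $d_s(b)\le d_s(c)+1$; part~(1) gives $a=d$ or $(a,d)\in A_\phi$, so in either case $d_s(a)\ge d_s(d)-1=d_s(c)$. Combining, $d_s(b)\le d_s(c)+1\le d_s(a)+1$, so $(a,b)$ is not an $s$-side shortcut. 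The $t$-side is symmetric: part~(1) yields $d_t(a)\le d_t(c)$ (using $d_t(d)=d_t(c)-1$), part~(2) yields $d_t(b)\ge d_t(c)-1$, hence $d_t(c)\le d_t(b)+1$, and together $d_t(a)\le d_t(c)\le d_t(b)+1$. The reverse arc $(d,c)$ falls out as the special case $a=d,\ b=c$, so the standard max-flow argument is subsumed.

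Finally I would close the loop from ``every new arc respects the labeling'' to ``no shortcut exists.'' Since the old arcs already respect the labeling (the invariant held before the push) and each new arc does too, a straightforward induction along any residual path $s=v_0,\dots,v_k=i$ gives $d_s(i)\le k$, so no $s$-$i$ path is shorter than $d_s(i)$, and symmetrically for $d_t$.

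I expect the main obstacle to be bookkeeping rather than conceptual: pinning down the correct reading of ``shortcut'' as the valid-labeling inequalities, and then handling the two-way case split from the lemma together with the sign relation between $d_s$ and $d_t$ along a shortest path. Once the equalities $d_s(d)=d_s(c)+1$ and $d_t(c)=d_t(d)+1$ are in hand, each of the four cases is a one-line inequality and the corollary follows immediately.
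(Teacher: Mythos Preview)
Your proposal is correct and follows the natural route the paper sets up: it uses the preceding lemma to relate the endpoints of any newly-residual arc to the endpoints of the pushed arc, then combines this with the shortest-path equalities $d_s(d)=d_s(c)+1$, $d_t(c)=d_t(d)+1$ to verify the valid-labeling inequalities on both sides. The paper defers the actual argument to its Supplementary Material (citing \cite{Fujishige:96} for the lemma), but the placement of the lemma immediately before the corollary makes clear that this derivation is exactly what is intended, and your case analysis carries it out cleanly.
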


The proofs are based on results of \cite{Fujishige:96}, and can be found in
the Supplementary Material.



Corollary~\ref{cor:valid-distance} ensures that we never create any
new shorter $s$-$i$ or $i$-$t$ paths not contained in $S$ or $T$. A
push operation may cause some edges to become saturated, but this is
the same problem as in the normal max flow case, and any orpans so
created will be fixed in the adoption step. Therefore, all invariants
of the IBFS algorithm are maintained, even in the submodular flow case.

One final property of the IBFS algorithm involves the use of the
``current arc heuristic'', which is a mechanism for avoiding iterating
through all possible potential parents when performing an adoption
step. In the case of Submodular Flows, it is also the case that
whenever we create new residual arcs we maintain all invariants
related to this current arc heuristic, so the same speedup applies
here. However, as this does not affect the correctness of the
algorithm, and only the runtime, we will defer the proof to the
Supplementary Material.

\textbf{Running time.}
The asymptotic complexity of the standard IBFS algorithm is
$O(n^2m)$. In the submodular-flow case, we still perform the same
number of basic operations. However, note finding residual capacity of
an arc $(i, j)_C$ requires minimizing $\overline{f}_C(S)$ for $S$
separating $i$ and $j$. If $|C| = k$, this can be done in time
$O(k^6)$ using \cite{Orlin:MP2009}. However, for $k << n$, it will
likely be much more efficient to use the $O(2^k)$ naive algorithm of
searching through all values of $\overline{f}_C$. Overall, we add
$O(2^k)$ work at each basic step of IBFS, so if we have $m$ cliques the total
runtime is $O(n^2m2^k)$.

This runtime is better than the augmenting paths algorithm of
\cite{Arora:ECCV12} which takes time $O(nm^22^k)$. Additionally, IBFS
has been shown to be very fast on typical vision inputs, independent
of its asymptotic complexity \cite{Goldberg:ESA11}.

\section{\S3SVM: SoS Structured SVMs}\label{sec:svm}

In this section, we first review the SVM algorithm and its associated
Quadratic Program (section \ref{sec:structured-svm}). We then decribe a
general class of SoS discriminant functions which can be learned by SVM-struct
(section \ref{sec:submodular-feature}) and explain this learning procedure
(section \ref{sec:solving-qp}).  Finally, we generalize SoS functions to the
multi-label case (section \ref{sec:multi-label}).

\subsection{Structured SVMs}\label{sec:structured-svm}

Structured output prediction describes the problem of learning a
function $h: \X \longrightarrow \Y$
where $\X$ is the space of inputs, and $\Y$ is the space of
(multivariate and structured) outputs for a given problem.
To learn $h$, we assume that a training sample of input-output pairs
$\Sample=((\x_1,\y_1),\dots,(\x_n,\y_n)) \in (\X \times \Y)^n$
is available and drawn i.i.d. from an unknown distribution. The goal
is to find a function $h$ from some hypothesis space $\HS$ that has
low prediction error, relative to a loss function $\loss(\y,\ny)$. The
function $\loss$ quantifies the error associated with predicting $\ny$
when $\y$ is the correct output value. For example, for image
segmentation, a natural loss function might be the Hamming distance
between the true segmentation and the predicted labeling.

The mechanism by which Structural SVMs finds a hypothesis $h$ is to
learn a discriminant function $f:\X \times \Y \to \mathbb{R}$ over
input/output pairs. One derives a prediction for a given input $\x$ by
minimizing $f$ over all $\y \in \Y$.\footnote{Note that the use of
  minimization departs from the usual language of
  \cite{Tsochantaridis/etal/04,Joachims/etal/09a} where the hypothesis
  is $\argmax f_w(\x, \y)$. However, because of the prevalence of cost
  functions throughout computer vision, we have replaced $f$ by $-f$
  throughout.} We will write this as
$h_\w(\x) = \argmin_{\y \in \Y} f_\w(\x,\y)$.
We assume  $f_\w(\x,\y)$ is linear in two quantities $\w$ and $\Psi$
$f_\w(\x,\y)=\w^T \Psi(\x,\y)$
where $\w \in \mathbb{R}^N$ is a parameter vector and $\Psi(\x,\y)$ is a
feature vector relating input $\x$ and output $\y$. Intuitively, one
can think of $f_\w(\x,\y)$ as a cost function that measures
how poorly the output $\y$ matches the given input $\x$.


Ideally, we would find weights $\w$ such that the hypothesis $h_\w$
always gives correct results on the training set. Stated another way,
for each example $\x_i$, the correct prediction $\y_i$ should have low
discriminant value, while incorrect predictions $\ny_i$ with large
loss should have high discriminant values. We write this
constraint as a linear inequality in $\w$
\begin{equation}\label{eqn:constr-nslack}
\w^T\Psi(\x_i, \ny_i) \geq \w^T\Psi(\x_i, \y_i) + \Delta(\y_i, \ny_i)
: \forall \ny \in \Y.
\end{equation}
It is convenient to define $\delta\Psi_i(\ny) = \Psi(\x_i, \ny) -
\Psi(\x_i, \y_i)$, so that the above inequality becomes
$\w^T\delta\Psi_i(\ny_i) \geq \Delta(\y_i,\ny_i)$.

Since it may not be possible to satisfy all these conditions exactly,
we also add a slack variable to the constraints for example
$i$. Intuitively, slack variable $\xi_i$ represents the maximum
misprediction loss on the $i$th example. Since we want to minimize the
prediction error, we add an objective function which penalizes large
slack. Finally, we also penalize $\|w\|^2$ to discourage overfitting,
with a regularization parameter $C$ to trade off these costs.

\noindent
\begin{optimizationproblem}{\sc \Nslack{} Structural
    SVM} \label{op:nslack_mr_primal}
  \vspace{-.1in}  
  \begin{eqnarray*}
      \min_{\w,\vec{\xi}\ge \vec{0}}& & \frac{1}{2} \w^T\w +
      \frac{C}{n}\sum_{i=1}^{n}\xi_i \label{eq:primalmargin} \\
      \forall i, \forall \ny_i \in \Y: & &  \w^T \delta\Psi_i(\ny_i) \geq
      \loss(\y_i,\ny_i) - \xi_i \nonumber
    \end{eqnarray*}
    \vspace*{-0.6cm}
\end{optimizationproblem}

\subsection{Submodular Feature Encoding}\label{sec:submodular-feature}

We now apply the Structured SVM (SVM-struct) framework to the problem of
learning SoS functions. 

For the moment, assume our prediction task is to assign a binary
label for each element of a base set $V$.
We will cover the multi-label case in section
\ref{sec:multi-label}. Since the labels are binary, prediction
consists of assigning a subset $S \subseteq V$ for each input (namely
the set $S$ of pixels labeled 1).

Our goal is to construct a feature vector $\Psi$ that, when used with
the SVM-struct algorithm of section \ref{sec:structured-svm}, will allow us
to learn sum-of-submodular energy functions. Let's begin with the
simplest case of learning a discriminant function $f_{C,\w}(S) =
\w^T\Psi(S)$, defined only on a single clique and which does not
depend on the input $x$.

Intuitively, our parameters $\w$ will correspond to the table of
values of the clique function $f_C$, and our feature vector $\Psi$
will be chosen so that $\w_{S} = f_C(S)$. We can accomplish this by
letting $\Psi$ and $\w$ have $2^{|C|}$ entries, indexed by subsets $T
\subseteq C$, and defining $\Psi_{T}(S) = \delta_{T}(S)$ (where
$\delta_T(S)$ is 1 if $S = T$ and 0 otherwise). Note that, as we
claimed,
\begin{equation}
f_{C,\w}(S) = \w^T\Psi(S) = \sum_{T \subseteq C} \w_{T} \delta_T(S) =
\w_{S}.
\end{equation}
\vspace{-.15in}

If our parameters $\w_{T}$ are allowed to vary over all
$\mathbb{R}^{2^{|C|}}$, then $f_C(S)$ may be an arbitrary function
$2^{C} \to \mathbb{R}$, and not necessarily submodular. However, we
can enforce submodularity by adding a number of linear
inequalities. Recall that $f$ is submodular if and only if $f(A\cup B)
+ f(A \cap B) \leq f(A) + f(B)$. Therefore, $f_{C,\w}$ is submodular
if and only if the parameters satisfy
\begin{equation}\label{eqn:submodular-constraints}
\w_{A\cup B} + \w_{ A \cap B} \leq \w_{A} + \w_{B} : \forall A,B
\subseteq C
\end{equation} 

These are just linear constraints in $\w$, so we can add them as
additional constraints to Quadratic Program
\ref{op:nslack_mr_primal}. There are $O(2^{|C|})$ of them, but each
clique has $2^{|C|}$ parameters, so this does not increase the
asymptotic size of the QP.

\begin{theorem} By choosing feature vector $\Psi_{T}(S) = \delta_{T}(S)$
  and adding the linear constraints (\ref{eqn:submodular-constraints})
  to Quadratic Program~\ref{op:nslack_mr_primal}, the learned
  discriminant function $f_\w(S)$ is the maximum margin function $f_C$,
  where $f_C$ is allowed to vary over all possible submodular functions
  $f : 2^C \to \mathbb{R}$.
\end{theorem}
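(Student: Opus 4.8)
The plan is to reduce the statement to three elementary observations: (i) the chosen feature map puts weight vectors $\w$ in bijection with functions $2^C\to\mathbb{R}$; (ii) under this bijection the added inequalities~(\ref{eqn:submodular-constraints}) cut out exactly the submodular functions; and (iii) the resulting Quadratic Program is then, by construction, the maximum-margin problem over that restricted hypothesis class. Combining (i)--(iii) yields the theorem.

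First I would make the bijection precise. For any $\w\in\mathbb{R}^{2^{|C|}}$ the induced discriminant satisfies $f_\w(S)=\w^T\Psi(S)=\sum_{T\subseteq C}\w_T\,\delta_T(S)=\w_S$, as already computed. Conversely, any $g:2^C\to\mathbb{R}$ is realized by the weight vector with $\w_T:=g(T)$. Hence $\w\mapsto f_\w$ is a linear bijection from $\mathbb{R}^{2^{|C|}}$ onto the space of all real functions on $2^C$, with no hidden relations among the coordinates $\w_T$.

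Next I would verify (ii). By definition $f_\w$ is submodular iff $f_\w(A\cup B)+f_\w(A\cap B)\le f_\w(A)+f_\w(B)$ for all $A,B\subseteq C$; substituting $f_\w(T)=\w_T$ turns each such inequality into $\w_{A\cup B}+\w_{A\cap B}\le\w_A+\w_B$, which is precisely a constraint in~(\ref{eqn:submodular-constraints}). Thus the projection onto the $\w$-coordinates of the feasible set of the augmented program is exactly $\{\w : f_\w\text{ submodular}\}$; this set is nonempty (e.g.\ $\w=0$, giving $f_\w\equiv 0$), so the augmented QP is still feasible once the $\xi_i$ are taken large enough.

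Finally, Quadratic Program~\ref{op:nslack_mr_primal} augmented with~(\ref{eqn:submodular-constraints}) minimizes $\frac12\w^T\w+\frac{C}{n}\sum_i\xi_i$ subject to the margin constraints $\w^T\delta\Psi_i(\ny_i)\ge\loss(\y_i,\ny_i)-\xi_i$, $\xi_i\ge 0$ --- which is the definition of the maximum-margin criterion for the hypothesis class $\{f_\w\}$ --- together with constraints that, by the previous paragraph, restrict that class to exactly the submodular functions $2^C\to\mathbb{R}$. Hence the optimal $f_\w$ is the maximum-margin submodular clique function, as claimed. I do not anticipate a genuine obstacle; the only point needing care is the ``only if'' half of step (ii) --- that \emph{every} $\w$ satisfying~(\ref{eqn:submodular-constraints}), not merely those coming from a function already known to be submodular, induces a submodular $f_\w$ --- which is immediate once one observes $f_\w(T)=\w_T$ with the coordinates otherwise unconstrained.
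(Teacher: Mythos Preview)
Your proposal is correct and follows essentially the same approach as the paper's proof: both establish that the map $\w\mapsto f_\w$ with $f_\w(S)=\w_S$ puts weight vectors satisfying~(\ref{eqn:submodular-constraints}) in bijection with submodular functions on $C$, and then conclude that the augmented QP optimizes the maximum-margin objective over exactly that class. Your version simply spells out the two directions of this correspondence and the feasibility of the augmented program in more detail than the paper's two-sentence argument.
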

\begin{proof} By adding constraints (\ref{eqn:submodular-constraints})
  to the QP, we ensure that the optimal solution $\w$ is defines a
  submodular  $f_\w$. Conversely, for any
  submodular function $f_C$, there is a feasible $\w$ defined by $\w_T
  = f_C(T)$, so the optimal solution to the QP must be the
  maximum-margin such function.
\end{proof}
To introduce a dependence on the data $\x$, we can define
$\Psi^\textrm{data}$ to be $\Psi_T^\textrm{data}(S,\x) = \delta_T(S) \Phi(\x)$ for an arbitrary
nonnegative function $\Phi : \X \to \mathbb{R}_{\geq 0}$.

\begin{corollary}\label{cor:data-dependence} With feature vector
  $\Psi^\mathrm{data}$ and adding linear constraints
  (\ref{eqn:submodular-constraints}) to
  QP~\ref{op:nslack_mr_primal}, the learned discriminant function is
  the maximum margin function $f_C(S)\Phi(x)$, where $f_C$ is allowed
  to vary over all possible submodular functions.
\end{corollary}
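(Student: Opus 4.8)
The plan is to reduce this to the preceding theorem by exploiting the fact that the data-dependent feature map factors through the data-independent one. First I would compute the discriminant explicitly: since $\Psi_T^{\mathrm{data}}(S,\x) = \delta_T(S)\Phi(\x)$,
\[
f_\w(S,\x) = \w^T\Psi^{\mathrm{data}}(S,\x) = \sum_{T\subseteq C}\w_T\,\delta_T(S)\,\Phi(\x) = \w_S\,\Phi(\x),
\]
so that $f_\w(\cdot,\x) = \Phi(\x)\cdot f_{C,\w}$, where $f_{C,\w}(S)=\w_S$ is exactly the clique function analyzed in the theorem.

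Next I would observe that the constraints appended to Quadratic Program~\ref{op:nslack_mr_primal} are the submodularity constraints (\ref{eqn:submodular-constraints}), which involve only $\w$ and not $\x$; they are therefore literally the same as in the data-independent setting. Hence the feasible region for $\w$ is precisely the set of weight vectors for which $S\mapsto\w_S$ is submodular, i.e.\ for which $f_{C,\w}$ is submodular. Since $\Phi$ is nonnegative, $\Phi(\x)\,f_{C,\w}$ is then submodular in $S$ for every $\x$, so every feasible $\w$ yields a discriminant of the claimed form $f_C(S)\Phi(\x)$ with $f_C$ submodular. For the converse, given any submodular $f_C:2^C\to\mathbb{R}$, the vector $\w$ with $\w_T=f_C(T)$ satisfies (\ref{eqn:submodular-constraints}) and induces exactly the discriminant $f_C(S)\Phi(\x)$, so every function of this form is attainable. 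Therefore the QP optimizes the structural-SVM objective over precisely the family $\{(S,\x)\mapsto f_C(S)\Phi(\x) : f_C \text{ submodular}\}$, and its optimum is the maximum-margin member of that family, which is the assertion of the corollary.

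I do not expect a real obstacle here; the argument is a direct reindexing of the proof of the theorem. The one point requiring care is the nonnegativity hypothesis on $\Phi$: it is exactly what guarantees that rescaling the learned clique function by $\Phi(\x)$ does not flip the direction of the inequalities (\ref{eqn:submodular-constraints}) and hence preserves submodularity in $S$, so that the minimization-based prediction rule of section~\ref{sec:structured-svm} still corresponds to an SoS minimization (a negative $\Phi(\x)$ would turn the learned function into a supermodular one). Beyond flagging this, the proof is essentially a one-line substitution into the previous result.
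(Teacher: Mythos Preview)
Your proposal is correct and follows the same approach as the paper: the paper's proof is a single sentence noting that nonnegativity of $\Phi(\x)$ together with constraints~(\ref{eqn:submodular-constraints}) forces the discriminant to be submodular, and your argument is simply a fuller unpacking of that observation (including the explicit computation $f_\w(S,\x)=\w_S\Phi(\x)$ and the converse direction).
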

\begin{proof} Because $\Phi(x)$ is nonnegative, constraints
  (\ref{eqn:submodular-constraints}) ensure that the discriminant
  function is again submodular.
\end{proof}

Finally, we can learn multiple clique potentials simultaneously. If we
have a neighborhood structure $\mathcal{C}$ with $m$ cliques, each
with a data-dependence $\Phi_C(\x)$, we create a feature vector
$\Psi^\textrm{sos}$ composed of concatenating the $m$ different
features $\Psi^\mathrm{data}_C$.

\begin{corollary} With feature vector $\Psi^\mathrm{sos}$, and adding
  a copy of the constraints (\ref{eqn:submodular-constraints}) for
  each clique $C$, the learned $f_\w$ is the maximum margin $f$ of the
  form
\begin{equation}
f(\x, S) = \sum_{C \in \mathcal{C}} f_C(S)\Phi_C(\x)
\end{equation}
where the $f_C$ can vary over all possible submodular functions on the
cliques $C$.
\end{corollary}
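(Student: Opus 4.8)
The plan is to bootstrap from Corollary~\ref{cor:data-dependence} by exploiting the block structure that concatenation induces on $\w$. First I would write $\w=(\w^{C_1},\dots,\w^{C_m})$, where the block $\w^{C}\in\mathbb{R}^{2^{|C|}}$ is indexed by subsets $T\subseteq C$ and is aligned with the sub-vector $\Psi^{\mathrm{data}}_{C}$ inside $\Psi^{\mathrm{sos}}$. Expanding the inner product block by block and using $\Psi^{\mathrm{data}}_{C,T}(S,\x)=\delta_T(S)\,\Phi_C(\x)$,
\[
f_\w(\x,S)=\w^T\Psi^{\mathrm{sos}}(\x,S)=\sum_{C\in\mathcal{C}}(\w^{C})^T\Psi^{\mathrm{data}}_{C}(S,\x)=\sum_{C\in\mathcal{C}}\w^{C}_{S\cap C}\,\Phi_C(\x),
\]
so, setting $f_{C,\w^{C}}(\cdot)=\w^{C}_{(\cdot)}$ as in the single-clique construction, $f_\w$ is already syntactically of the claimed form $\sum_{C} f_{C,\w^{C}}(S)\,\Phi_C(\x)$.

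Next I would verify the two inclusions. Forward direction: since we add a \emph{separate} copy of the constraints (\ref{eqn:submodular-constraints}) for every clique, each block satisfies $\w^{C}_{A\cup B}+\w^{C}_{A\cap B}\le\w^{C}_{A}+\w^{C}_{B}$ for all $A,B\subseteq C$, which is exactly submodularity of $f_{C,\w^{C}}$; and because $\Phi_C\ge 0$, each summand $f_{C,\w^{C}}(S)\,\Phi_C(\x)$ is submodular in $S$ (the observation already used in Corollary~\ref{cor:data-dependence}). Hence every feasible $\w$ yields an $f_\w(\x,\cdot)$ that is a genuine SoS function of the stated form, and the prediction $h_\w(\x)=\argmin_{S}f_\w(\x,S)$ is computable by the submodular-flow/IBFS machinery of Section~\ref{sec:sos}. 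Converse: given any $f(\x,S)=\sum_{C}f_C(S)\,\Phi_C(\x)$ with each $f_C:2^C\to\mathbb{R}$ submodular, define $\w^{C}_{T}=f_C(T)$; submodularity of $f_C$ is precisely (\ref{eqn:submodular-constraints}) for block $C$, so the resulting $\w$ is feasible for the augmented QP and realizes $f$. Thus the augmented QP ranges over exactly the discriminant functions of the claimed family.

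Finally I would close with the optimality argument. The augmented QP has the same objective $\tfrac12\w^T\w+\tfrac{C}{n}\sum_i\xi_i$ and the same margin constraints $\w^T\delta\Psi_i(\ny_i)\ge\loss(\y_i,\ny_i)-\xi_i$ as Quadratic Program~\ref{op:nslack_mr_primal}, all convex (indeed linear in $\w$ apart from the quadratic regularizer), so its minimizer is the maximum-margin element of the feasible set, which by the previous paragraph is exactly $\{\sum_{C} f_C\,\Phi_C : f_C\text{ submodular on }C\}$. Here ``maximum margin'' is with respect to $\|\w\|^2=\sum_C\|\w^{C}\|^2$, i.e.\ the total squared norm of the clique tables.

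I expect no deep obstacle: the argument is essentially a bookkeeping reduction to the single-clique theorem of Section~\ref{sec:submodular-feature} and to Corollary~\ref{cor:data-dependence}. The one point that genuinely needs care is checking that concatenation introduces \emph{no coupling} between blocks --- neither in the feature map nor in the constraint set --- so that (i) block-wise feasibility is equivalent to feasibility of the augmented QP, and (ii) the per-clique submodularity guarantees compose additively into SoS form. A secondary point worth stating explicitly is the role of nonnegativity of $\Phi_C$: without it, multiplying a submodular $f_C$ by $\Phi_C(\x)$ could destroy submodularity and break the forward inclusion.
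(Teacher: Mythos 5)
Your argument is correct and is exactly the bookkeeping the paper intends: the paper states this corollary without proof precisely because it follows from the single-clique theorem and Corollary~\ref{cor:data-dependence} by the block-wise decomposition of $\w$ and the per-clique copies of constraints (\ref{eqn:submodular-constraints}) that you spell out. Your explicit check of both inclusions (feasible $\w$ gives the SoS form, and any such $f$ is realized by a feasible $\w$) and your remark on the nonnegativity of $\Phi_C$ are faithful elaborations of the paper's implicit reasoning, not a different route.
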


\subsection{Solving the quadratic program}\label{sec:solving-qp}

\begin{algorithm}[t]
\begin{algorithmic}[1]
\STATE Input: $\Sample=((\x_1,\y_1),\ldots,(\x_n,\y_\n))$, $C$, $\epsilon$
\STATE $\WS \leftarrow \emptyset$
\REPEAT
  \STATE Recompute the QP solution with the current constraint set:
  \newline
  $(\w, \vec{\xi}) \leftarrow \argmin_{\w,\vec{\xi}\geq 0}
      \frac{1}{2} \w^T\w + C \vec{\xi} 
      \newline
      \hspace*{0.5cm}s.t.\ \textrm{for all } (\ny_1, \ldots, \ny_n) \in \WS: 
      \newline
      \hspace*{1.0cm}\frac{1}{n} \w^T\sum_{i = 1}^n \delta\Psi_i(\ny_i)\geq \frac{1}{n} \sum_{i = 1}^n
      \Delta(\y_i, \ny_i) - \vec{\xi} 
      \newline
      \hspace*{0.5cm}s.t.\ \textrm{for all } C \in \mathcal{C},\ A,B\subseteq C :
      \newline
      \hspace*{1.0cm}\w_{C,A\cup B} + \w_{C,A \cap B} \leq \w_{C,A} + \w_{C,B}$
  \FOR{i=1,...,\n} 
    \STATE Compute the maximum violated constraint:
    \newline 
    $\nynew_i \leftarrow \argmin_{\nynew \in \Y} \{\w^T
    \Psi(\x_i,\nynew) - \loss(\y_i,\nynew) \}$\newline
    by using IBFS to minimize $f_\w(\x_i, \nynew) - \Delta(\y_i, \nynew)$.
   \ENDFOR
   \STATE $\WS \leftarrow \WS \cup\{(\nynew_1, \ldots, \nynew_n)\}$
 \UNTIL {the slack of the max-violated constraint is $\leq \xi + \epsilon$.}\hspace{-2ex}
 \RETURN($\w$,$\xi$)
\end{algorithmic}
\caption{\label{alg:nslack_mr_alg}: S3SVM via the \Oneslack{} Formulation.}
\end{algorithm}

The \nslack{} formulation for SSVMs (QP~\ref{op:nslack_mr_primal})
makes intuitive sense, from the point of view of minimizing the
misprediction error on the training set. However, in practice it is
better to use the \oneslack{} reformulation of this QP from
\cite{Joachims/etal/09a}. Compared to \nslack{}, the \oneslack{} QP
can be solved several orders of magnitude faster in practice, as well
as having asymptotically better complexity.

The \oneslack{} formulation is an equivalent QP which replaces the $n$
slack variables $\xi_i$ with a single variable $\xi$. The loss constraints
(\ref{eqn:constr-nslack}) are replaced with constraints penalizing
the sum of losses across all training examples. We also include
submodular constraints on $\w$.

\begin{optimizationproblem}{\sc 1-Slack Structural SVM} \label{op:oneslack_mr_primal}
\begin{eqnarray*}
\min_{\w,\xi \ge 0} & & \frac{1}{2} \w^T \w+ C \: \xi \qquad 
\text{s.t. } \: \forall (\ny_1, ..., \ny_\n) \in \Y^\n : \\
& & \hspace{-.9cm}\frac{1}{\n} \w^T \sum_{i=1}^{\n}
\delta\Psi_i(\ny_i) \ge \frac{1}{\n}\sum_{i=1}^{\n}\loss(\y_i,\ny_i) -
\xi \\ 
& & \hspace{-.9cm}\forall C \in \mathcal{C},\ A,B\subseteq C: \: \w_{C,A\cup B} +
\w_{C, A\cap B} \leq \w_{C,A} + \w_{C,B}
\end{eqnarray*}
\vspace*{-0.8cm}
\end{optimizationproblem}

Note that we have a constraint for each tuple $(\ny_1, \ldots, \ny_\n)
\in \Y^\n$, which is an exponential
sized set. Despite the large set of constraints, we can solve this
QP to any desired precision $\epsilon$ by using the
cutting plane algorithm. This algorithm keeps track of a set $\WS$ of
current constraints, solves the current QP with regard to those
constraints, and then given a solution $(\w, \xi)$, finds the most
violated constraint and adds it to $\WS$. Finding the most violated
constraint consists of solving for each example $\x_i$ the problem
\begin{equation}\label{eqn:inference}
  \nynew_i = \argmin_{\nynew \in \Y} f_\w(\x, \nynew) -
  \Delta(\y_i, \nynew).
\end{equation}
Since the features $\Psi$ ensure that $f_\w$ is SoS, then as long as
$\Delta$ factors as a sum over the cliques $\mathcal{C}$ (for
instance, the Hamming loss is such a function), then
(\ref{eqn:inference}) can be solved with Submodular IBFS. Note that this
also allows us to add arbitrary additional features for learning the unary
potentials as well. Pseudocode for the entire \S3SVM learning is given
in Algorithm~\ref{alg:nslack_mr_alg}.

\subsection{Generalization to multi-label prediction}\label{sec:multi-label}

Submodular functions are intrinsically binary functions.
In order to handle the multi-label case, we use expansion moves
\cite{BVZ:PAMI01} to reduce the multi-label optimization problem to a
series of binary subproblems, where each pixel may either switch to a
given label $\alpha$ or keep its current label.  If every binary
subproblem of computing the optimal expansion move is an SoS problem,
we will call the original multi-label energy function an SoS expansion
energy.

Let $L$ be our label set, with output space $\Y = L^V$. Our learned
function will have the form
$f(\y) = \sum_{C\in \mathcal{C}} f_C(\y_C)$
where $f_C : L^C \to
\mathbb{R}$.  For a clique $C$ and label $\ell$, define $C_\ell = \{i
\mid y_i = \ell\}$, i.e., the subset of $C$ taking label $\ell$.

\begin{theorem}\label{lemma:sos-exp} If all the clique functions are of the form 
\begin{equation}
f_C(\y_C) = \sum_{\ell \in L} g_\ell(C_\ell)
\end{equation}
where each $g_\ell$ is submodular, then any expansion move for the
multi-label energy function $f$ will be SoS.
\end{theorem}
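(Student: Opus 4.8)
The plan is to parametrize an $\alpha$-expansion move on the current labeling $\y$ by the set $S\subseteq V$ of pixels that hold label $\alpha$ after the move, so that the post-move labeling $\y'$ satisfies $y'_i=\alpha$ for $i\in S$ and $y'_i=y_i$ for $i\notin S$. The binary energy of the move is then $\tilde f(S)=f(\y')=\sum_{C\in\mathcal{C}}f_C(\y'_C)$, and since each $f_C$ depends only on the restriction $\y'_C$, it depends on $S$ only through $A:=S\cap C$. So it suffices to show that for each clique $C$ the function $A\mapsto f_C(\y'_C)$ is submodular on $2^C$; this exhibits $\tilde f$ in exactly the sum-of-submodular form of \eqref{eqn:sos}, with the same clique structure $\mathcal{C}$.

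First I would record how the post-move level sets $(C_\ell)':=\{i\in C: y'_i=\ell\}$ depend on $A$. Writing $C_\ell=\{i\in C: y_i=\ell\}$ for the pre-move level sets, a short case check gives $(C_\alpha)'=A\cup C_\alpha$ (a pixel of $C$ holds $\alpha$ afterwards iff it switched or already held $\alpha$) and, for $\ell\neq\alpha$, $(C_\ell)'=C_\ell\setminus A$ (a pixel holds $\ell\neq\alpha$ afterwards iff it did not switch and already held $\ell$). Substituting into the hypothesis $f_C(\y'_C)=\sum_{\ell\in L}g_\ell((C_\ell)')$ yields
\[
  f_C(\y'_C)= g_\alpha(A\cup C_\alpha) + \sum_{\ell\neq\alpha} g_\ell(C_\ell\setminus A),
\]
now viewed purely as a function of $A\subseteq C$.

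Next I would invoke two elementary closure properties of submodularity: if $g:2^C\to\mathbb{R}$ is submodular and $B\subseteq C$ is fixed, then both $A\mapsto g(A\cup B)$ and $A\mapsto g(B\setminus A)$ are submodular. These follow from the lattice identities $(A_1\cup B)\cap(A_2\cup B)=(A_1\cap A_2)\cup B$, $(A_1\cup B)\cup(A_2\cup B)=(A_1\cup A_2)\cup B$ and, respectively, $(B\setminus A_1)\cap(B\setminus A_2)=B\setminus(A_1\cup A_2)$, $(B\setminus A_1)\cup(B\setminus A_2)=B\setminus(A_1\cap A_2)$, which let the submodularity inequality for $g$ pass directly to the composed function. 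Applying the first property to the $\ell=\alpha$ term and the second to each $\ell\neq\alpha$ term shows every summand above is submodular in $A$; since a sum of submodular functions is submodular, $A\mapsto f_C(\y'_C)$ is submodular on $2^C$, and summing over $C\in\mathcal{C}$ exhibits $\tilde f$ as SoS.

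The only real work is the bookkeeping in the middle step — correctly tracking the level sets under the move and, in particular, separating the $\ell=\alpha$ case (where $A$ enters by union with $C_\alpha$) from the $\ell\neq\alpha$ cases (where $A$ enters by set difference); the two submodularity-closure facts are entirely standard. I would also note that unary potentials (size-one cliques) and, more generally, any already-SoS components of $f$ are covered verbatim by the same argument, and that when all cliques have size two the resulting expansion energies are regular and hence solvable by ordinary graph cuts.
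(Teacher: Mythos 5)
Your proof is correct and follows essentially the same route as the paper's: write the per-clique expansion energy as $g_\alpha(C_\alpha\cup S)+\sum_{\ell\neq\alpha}g_\ell(C_\ell\setminus S)$ and invoke closure of submodularity under union with, and difference from, a fixed set. The only difference is that you verify those two closure facts explicitly via lattice identities, whereas the paper simply cites them as standard.
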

\begin{proof}
  Fix a current labeling $\y$, and let $B(S)$ be the energy when the
  set $S$ switches to label $\alpha$. We can write $B(S)$ in terms of
  the clique functions and sets $C_\ell$ as
\begin{equation}
B(S) = \sum_{C \in \mathcal{C}} \bigg(g_\alpha(C_\alpha \cup S) +
  \sum_{\ell\neq\alpha} g_\ell(C_\ell \setminus S)\bigg)
\end{equation}

We use a fact from the theory of submodular functions: if $f(S)$ is
submodular, then for any fixed $T$ both $f(T \cup S)$ and $f(T
\setminus S)$ are also submodular. Therefore, $B(S)$ is SoS.
\end{proof}

Theorem~\ref{lemma:sos-exp} characterizes a large class of SoS
expansion energies. These functions generalize commonly used
multi-label clique functions, including the $\calP^n$ Potts model
\cite{Torr:TPAMI09}.  The $\calP^n$ model pays cost $\lambda_i$ when
all pixels are equal to label $i$, and $\lambda_{\max}$ otherwise.  We
can write this as an SoS expansion energy by letting $g_\ell(S) =
\lambda_i - \lambda_{\max}$ if $S = C$ and otherwise $0$.  Then,
$\sum_\ell g_\ell(S)$ is equal to the $\calP^n$ Potts model, up to an
additive constant.  Generalizations such as the robust $\calP^n$ model
\cite{Kohli:IJCV09} can be encoded in a similar fashion.  Finally, in
order to learn these functions, we let $\Psi$ be composed of copies of
$\Psi^\mathrm{data}$ --- one for each $g_\ell$, and add
corresponding copies of the constraints
(\ref{eqn:submodular-constraints}). 

As a final note: even though the individual expansion moves can be
computed optimally, $\alpha$-expansion still may not find the global
optimum for the multi-labeled energy. However, in practice
$\alpha$-expansion finds good local optima, and has been used for
inference in Structural SVM with good results, as in
\cite{Koppula/etal/11}. 

\section{Experimental Results}

\def\subfigw#1#2{{\includegraphics[width=#1]{Images/#2}}}
\def\subfigh#1#2{{\includegraphics[height=#1]{Images/#2}}}

\def\wid{0.24\linewidth}
\begin{figure}
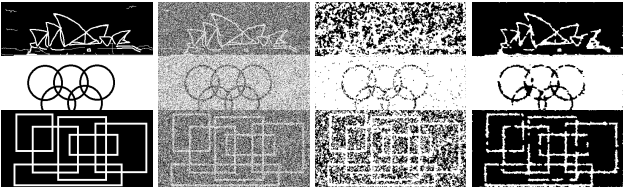

  \begin{center}
      \subfigw{\wid}{sydney.png}
      \subfigw{\wid}{noisy-sydney.png}
      \subfigw{\wid}{baseline-sydney.png}
      \subfigw{\wid}{svm-sydney.png}
      \vspace{-2ex}\\
      \subfigw{\wid}{shapes15.png}
      \subfigw{\wid}{noisy-shapes15.png}
      \subfigw{\wid}{baseline-shapes15.png}
      \subfigw{\wid}{svm-shapes15.png}
      \vspace{-2ex}\\
      \subfigw{\wid}{shapes01.png}
      \subfigw{\wid}{noisy-shapes01.png}
      \subfigw{\wid}{baseline-shapes01.png}
      \subfigw{\wid}{svm-shapes01.png}
      \vspace{0ex} 
    \caption{Example images from the binary segmentation results. From
    left to right, the columns are (a) the original image (b) the
    noisy input (c) results from Generic Cuts \cite{Arora:ECCV12} (d) our results.}
    \label{fig:binary-images}
  \end{center}
\vspace{-.3in}
\end{figure}

In order to evaluate our algorithms, we focused on binary denoising and
interactive segmentation.  For binary denoising, Generic Cuts
\cite{Arora:ECCV12} provides the most natural comparison since it is a
state-of-the-art method that uses SoS priors.  For interactive segmentation
the natural comparison is against GrabCut \cite{Rother:GrabCut04}, where we
used the OpenCV implementation. We ran our general \S3SVM{} method, which can
learn an arbitrary SoS function, an also considered the
special case of only using pairwise priors.  
For both the denoising and segmentation applications, we significantly improve
on the accuracy of the hand-tuned energy functions.

\subsection{Binary denoising}

Our binary denoising dataset consists of a set of 20 black and white
images. Each image is $400\times 200$ and either a set of geometric lines, or
a hand-drawn sketch (see Figure~\ref{fig:binary-images}).  We were unable to
obtain the original data used by \cite{Arora:ECCV12}, so we created our own
similar data by adding independent Gaussian noise at each pixel.

For denoising, the hand-tuned Generic Cuts algorithm of \cite{Arora:ECCV12} posed
a simple MRF, with unary pixels equal to the absolute valued distance from the
noisy input, and an SoS prior, where each $2\times 2$ clique
penalizes the square-root of the number of edges with different labeled
endpoints within that clique. There is a single parameter $\lambda$, which is
the tradeoff between the unary energy and the smoothness term.  The
neighborhood structure $\mathcal{C}$ consists of all $2\times 2$ patches of
the image.

Our learned prior includes the same unary terms and clique structure,
but instead of the square-root smoothness prior, we learn a clique
function $g$ to get an MRF
$E_\textrm{SVM}(y) = \sum_i |y_i - x_i| + \sum_{C \in \mathcal{C}}
g(y_C)$.
Note that each clique has the same energy as every other, so this is
analogous to a graph cuts prior where each pairwise edge has the same
attractive potential. Our energy function has 16 total parameters (one for each
possible value of $g$, which is defined on $2\times 2$ patches).

We randomly divided the 20 input images into 10 training images and 10 test
images. The loss function was the Hamming distance between the correct,
un-noisy image and the predicted image. To hand tune the value $\lambda$, we
picked the value which gave the minimum pixel-wise error on the training set.
\S3SVM{} training took only 16 minutes.

Numerically, \S3SVM{} performed signficantly better than the hand-tuned method,
with an average pixel-wise error of only 4.9\% on the training set, compared
to 28.6\% for Generic Cuts. The time needed to do inference after training was
similar for both methods: 0.82 sec/image for \S3SVM{} vs. 0.76 sec/image for
Generic Cuts. Visually, the \S3SVM{} images are significantly cleaner looking,
as shown in Figure~\ref{fig:binary-images}.

\subsection{Interactive segmentation}

\def\txtw{.9in}
\def\txtraise{.45in}
\def\hgt{1.0in}
\def\negspace{-.12in}
\begin{figure*}[t]
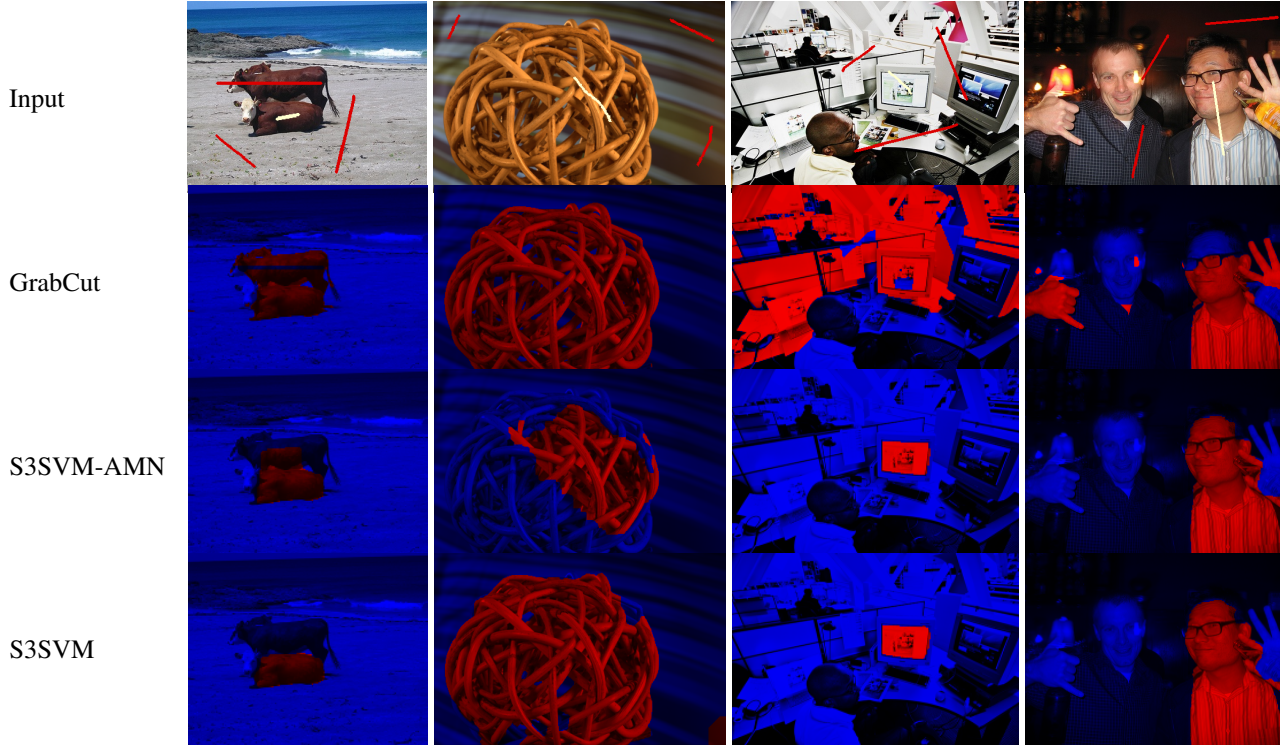

  \begin{center}
      \subfigure{\makebox[\txtw][l]{\raisebox{\txtraise}{Input}}}
      \subfigh{\hgt}{overlay-cow-2.png}
      \subfigh{\hgt}{overlay-ball.png}
      \subfigh{\hgt}{overlay-monitor.png}
      \subfigh{\hgt}{overlay-person.png}
      \vspace{\negspace}
      \\

      \subfigure{\makebox[\txtw][l]{\raisebox{\txtraise}{GrabCut}}}
     \subfigh{\hgt}{grabcut-cow-2.png}
      \subfigh{\hgt}{grabcut-ball.png}
      \subfigh{\hgt}{grabcut-monitor.png}
      \subfigh{\hgt}{grabcut-person.png}
      \vspace{\negspace}
      \\
      \subfigure{\makebox[\txtw][l]{\raisebox{\txtraise}{S3SVM-AMN}}}
      \subfigh{\hgt}{pairwise-cow-2.png}
      \subfigh{\hgt}{pairwise-ball.png}
      \subfigh{\hgt}{pairwise-monitor.png}
      \subfigh{\hgt}{pairwise-person.png}
      \vspace{\negspace}
      \\

      \subfigure{\makebox[\txtw][l]{\raisebox{\txtraise}{S3SVM}}}
     \subfigh{\hgt}{submodular-cow-2.png}
      \subfigh{\hgt}{submodular-ball.png}
      \subfigh{\hgt}{submodular-monitor.png}
      \subfigh{\hgt}{submodular-person.png}
      \vspace{0in} 
     
      \caption{Example images from binary segmentation results. Input with
        user annotations are shown at top, with results below.}
    \label{fig:interactive-images}
  \end{center}
\vspace{-.3in}
\end{figure*}

The input to interactive segmentation is a color image, together with
a set of sparse foreground/background annotations provided by the
user. See Figure \ref{fig:interactive-images} for examples. From the
small set of labeled foreground and background pixels, the prediction
task is to recover the ground-truth segmentation for the whole
image.

Our baseline comparison is the Grabcut algorithm, which solves a
pairwise CRF. The unary terms of the CRF are obtained by fitting a
Gaussian Mixture Model to the histograms of pixels labeled as being
definitely foreground or background. The pairwise terms are a standard
contrast-sensitive Potts potential, where the cost of pixels $i$ and
$j$ taking different labels is equal to $\lambda\cdot
\textrm{exp}(-\beta |x_i - x_j|)$ for some hand-coded parameters
$\beta, \lambda$.  Our primary comparison is against the OpenCV implementation of Grabcut,
available at \url{www.opencv.org}. 

As a special case, our algorithm can be applied to pairwise-submodular energy functions,
for which it solves the same optimization problem as in Associative Markov
Networks (AMN's) \cite{Taskar/etal/04a,Anguelov/etal/05}.  Automatically
learning parameters allows us to add a large number of learned unary features
to the CRF. 

As a result, in addition to the smoothness parameter $\lambda$, we also
learn the relative weights of approximately 400 features
describing the color values near a pixel, and relative distances to the
nearest labeled foreground/background pixel. Further details on these features
can be found in the Supplementary Material.  We refer to this method as
\S3SVM-AMN.

Our general \S3SVM{} method can incorporate higher-order priors instead of
just pairwise ones. In addition to the unary features used in \S3SVM-AMN, we
add a sum-of-submodular higher-order CRF. Each $2\times 2$ patch in the image
has a learned submodular clique function. To obtain the benefits of the
contrast-sensitive pairwise potentials for the higher-order case, we cluster
(using $k$-means) the $x$ and $y$ gradient responses of each patch into 50
clusters, and learn one submodular potential for each cluster.  Note that
\S3SVM{} automatically allows learning the entire energy function, including
the clique potentials and unary potentials (which come from the data)
simultaneously.

We use a standard interactive segmentation dataset from
\cite{Gulshan:CVPR10} of 151 images with annotations, together with
pixel-level segmentations provided as ground truth. These images were
randomly sorted into training, validation and testing sets, of size
75, 38 and 38 respectively. We trained both \S3SVM{}-AMN and \S3SVM{}
on the training set for various values of the regularization parameter
$c$, and picked the value $c$ which gave the best accuracy on the
validation set, and report the results of that value $c$ on the test
set.

The overall performance is shown in the table below. Training time is
measured in seconds, and testing time in seconds per image.  Our
implementation, which used the submodular flow algorithm based on IBFS
discussed in section~\ref{sec:ibfs}, will be made freely available
under the MIT license.

\begin{flushleft}
\begin{tabular}{|lrrr|}
\hline
{\bf Algorithm}    & {\bf Average error}   & {\bf Training} & {\bf Testing} \\ 
\hline
Grabcut      & 10.6$\pm$ 1.4\%   & n/a                 & 1.44 \\
\S3SVM-AMN & 7.5$\pm$ 0.5\%    & 29000               & 0.99 \\
\S3SVM      & 7.3$\pm$ 0.5\%    & 92000               & 1.67\\
\hline
\end{tabular}
\end{flushleft}

Learning and validation was performed 5 times with independently sampled
training sets. The averages and standard deviations shown above are from these
5 samples.

\def\wid{0.485\linewidth}
\begin{figure}[t]
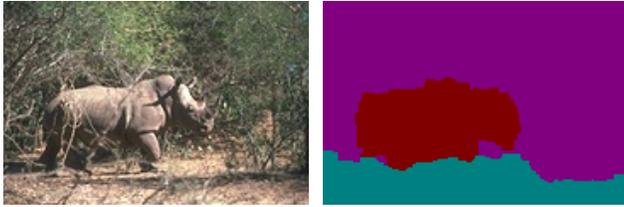

  \begin{center}
      \subfigw{\wid}{rhino.png} \hspace*{.01in}
      \subfigw{\wid}{submodular-rhino.png}\\
      \vspace{0ex} 
    \caption{A multi-label segmentation result, on data from
      \cite{Zemel:CVPR04}. The purple label represents vegetation, red is
      rhino/hippo and blue is ground.  There are 7 labels in the input
      problem, though only 3 are present in the output we obtain on this particular
      image.}
    \label{fig:multilabel}
  \end{center}
\vspace{-.3in}
\end{figure}

While our focus is on binary labeling problems, we have conducted some preliminary
experiments with the multi-label version of our method described in
section~\ref{sec:multi-label}. 
A sample result is shown in figure~\ref{fig:multilabel}, using an image taken
the Corel dataset used in \cite{Zemel:CVPR04}.  

{\small 
\bibliographystyle{ieee} 

\begin{thebibliography}{10}\itemsep=-1pt

\bibitem{Anguelov/etal/05}
D.~Anguelov, B.~Taskar, V.~Chatalbashev, D.~Koller, D.~Gupta, G.~Heitz, and
  A.~Y. Ng.
\newblock Discriminative learning of {Markov Random Fields} for segmentation of
  3{D} scan data.
\newblock In {\em CVPR}, 2005.

\bibitem{Arora:ECCV12}
C.~Arora, S.~Banerjee, P.~Kalra, and S.~N. Maheshwari.
\newblock Generic cuts: an efficient algorithm for optimal inference in higher
  order {MRF-MAP}.
\newblock In {\em ECCV}, 2012.

\bibitem{Balcan/Harvey/11}
M.-F. Balcan and N.~J.~A. Harvey.
\newblock Learning submodular functions.
\newblock In {\em {STOC}}, 2011.

\bibitem{BK:PAMI04}
Y.~Boykov and V.~Kolmogorov.
\newblock An experimental comparison of min-cut/max-flow algorithms for energy
  minimization in vision.
\newblock {\em TPAMI}, 26(9), 2004.

\bibitem{BVZ:PAMI01}
Y.~Boykov, O.~Veksler, and R.~Zabih.
\newblock Fast approximate energy minimization via graph cuts.
\newblock {\em TPAMI}, 23(11), 2001.

\bibitem{Edmonds:ADM77}
J.~Edmonds and R.~Giles.
\newblock A min-max relation for submodular functions on graphs.
\newblock {\em Annals of Discrete Mathematics}, 1:185--204, 1977.

\bibitem{Finley/Joachims/08a}
T.~Finley and T.~Joachims.
\newblock Training structural {SVMs} when exact inference is intractable.
\newblock In {\em ICML}, 2008.

\bibitem{FGBZ:ICCV11}
A.~Fix, A.~Gruber, E.~Boros, and R.~Zabih.
\newblock A graph cut algorithm for higher-order {Markov} {Random} {Fields}.
\newblock In {\em ICCV}, 2011.

\bibitem{Fujishige:96}
S.~Fujishige and X.~Zhang.
\newblock A push/relabel framework for submodular flows and its refinement for
  0-1 submodular flows.
\newblock {\em Optimization}, 38(2):133--154, 1996.

\bibitem{Goldberg:ESA11}
A.~V. Goldberg, S.~Hed, H.~Kaplan, R.~E. Tarjan, and R.~F. Werneck.
\newblock Maximum flows by incremental breadth-first search.
\newblock In {\em European Symposium on Algorithms}, 2011.

\bibitem{Gulshan:CVPR10}
V.~Gulshan, C.~Rother, A.~Criminisi, A.~Blake, and A.~Zisserman.
\newblock Geodesic star convexity for interactive image segmentation.
\newblock In {\em CVPR}, 2010.

\bibitem{Zemel:CVPR04}
X.~He, R.~S. Zemel, and M.~A. Carreira-Perpi\~{n}\'{a}n.
\newblock Multiscale conditional random fields for image labeling.
\newblock In {\em CVPR}, 2004.

\bibitem{Ishikawa:TPAMI10}
H.~Ishikawa.
\newblock Transformation of general binary {MRF} minimization to the first
  order case.
\newblock {\em TPAMI}, 33(6), 2010.

\bibitem{Joachims/etal/09a}
T.~Joachims, T.~Finley, and C.-N. Yu.
\newblock Cutting-plane training of structural {SVMs}.
\newblock {\em Machine Learning}, 77(1), 2009.

\bibitem{Torr:TPAMI09}
P.~Kohli, M.~P. Kumar, and P.~H. Torr.
\newblock P3 and beyond: Move making algorithms for solving higher order
  functions.
\newblock {\em TPAMI}, 31(9), 2008.

\bibitem{Kohli:IJCV09}
P.~Kohli, L.~Ladicky, and P.~Torr.
\newblock Robust higher order potentials for enforcing label consistency.
\newblock {\em IJCV}, 82, 2009.

\bibitem{Kolmogorov:DAM12}
V.~Kolmogorov.
\newblock Minimizing a sum of submodular functions.
\newblock {\em Discrete Appl. Math.}, 160(15):2246--2258, Oct. 2012.

\bibitem{KZ:PAMI04}
V.~Kolmogorov and R.~Zabih.
\newblock What energy functions can be minimized via graph cuts?
\newblock {\em TPAMI}, 26(2), 2004.

\bibitem{Koppula/etal/11}
H.~Koppula, A.~Anand, T.~Joachims, and A.~Saxena.
\newblock Semantic labeling of {3D} point clouds for indoor scenes.
\newblock In {\em NIPS}, 2011.

\bibitem{Kwatra:SIGGRAPH03}
V.~Kwatra, A.~Schodl, I.~Essa, G.~Turk, and A.~Bobick.
\newblock Graphcut textures: Image and video synthesis using graph cuts.
\newblock {\em SIGGRAPH}, 2003.

\bibitem{Ladicky/etal/09}
L.~Ladicky, C.~Russell, P.~Kohli, and P.~H.~S. Torr.
\newblock Associative hierarchical {CRF}s for object class image segmentation.
\newblock In {\em ICCV}, 2009.

\bibitem{Lafferty/etal/01}
J.~Lafferty, A.~McCallum, and F.~Pereira.
\newblock Conditional random fields: Probabilistic models for segmenting and
  labeling sequence data.
\newblock In {\em {ICML}}, 2001.

\bibitem{Lin/Bilmes/12}
H.~Lin and J.~Bilmes.
\newblock Learning mixtures of submodular shells with application to document
  summarization.
\newblock In {\em UAI}, 2012.

\bibitem{Munoz/etal/09}
D.~Munoz, J.~A. Bagnell, N.~Vandapel, and M.~Hebert.
\newblock Contextual classification with functional max-margin {Markov}
  networks.
\newblock In {\em CVPR}, 2009.

\bibitem{Orlin:MP2009}
J.~B. Orlin.
\newblock A faster strongly polynomial time algorithm for submodular function
  minimization.
\newblock {\em Math. Program.}, 118(2):237--251, Jan. 2009.

\bibitem{Roth:IJCV09}
S, Roth and M, Black.
\newblock Fields of experts.
\newblock {\em IJCV}, 82, 2009.

\bibitem{Rother:GrabCut04}
C.~Rother, V.~Kolmogorov, and A.~Blake.
\newblock {``GrabCut''} - interactive foreground extraction using iterated
  graph cuts.
\newblock {\em SIGGRAPH}, 23(3):309--314, 2004.

\bibitem{Sipos/etal/12a}
R.~Sipos, P.~Shivaswamy, and T.~Joachims.
\newblock Large-margin learning of submodular summarization models.
\newblock In {\em  EACL}, 2012.

\bibitem{Szummer/etal/08}
M.~Szummer, P.~Kohli, and D.~Hoiem.
\newblock Learning {CRF}s using graph cuts.
\newblock In {\em ECCV}, 2008.

\bibitem{Taskar/etal/04a}
B.~Taskar, V.~Chatalbashev, and D.~Koller.
\newblock Learning associative markov networks.
\newblock In {\em ICML}, 2004.

\bibitem{Taskar:ICML04}
B.~Taskar, V.~Chatalbashev, and D.~Koller.
\newblock Learning associative markov networks.
\newblock In {\em ICML}. ACM, 2004.

\bibitem{Taskar/etal/03}
B.~Taskar, C.~Guestrin, and D.~Koller.
\newblock Maximum-margin markov networks.
\newblock In {\em {NIPS}}, 2003.

\bibitem{Tsochantaridis/etal/04}
I.~Tsochantaridis, T.~Hofmann, T.~Joachims, and Y.~Altun.
\newblock Support vector machine learning for interdependent and structured
  output spaces.
\newblock In {\em ICML}, 2004.

\bibitem{Yue/Joachims/08a}
Y.~Yue and T.~Joachims.
\newblock Predicting diverse subsets using structural {SVMs}.
\newblock In {\em ICML}, 2008.

\end{thebibliography}

}
\end{document}